  \providecommand\BibTeX{{%
    \normalfont B\kern-0.5em{\scshape i\kern-0.25em b}\kern-0.8em\TeX}}}
\newtheorem{theorem}{Theorem}
\begin{document}

\title{Adaptive-Step Graph Meta-Learner for Few-Shot Graph Classification}

\author{Ning Ma}
\affiliation{\institution{College of Computer Science, Zhejiang University}}
\email{ma ning@zju.edu.cn}

\author{Jiajun Bu}
\affiliation{\institution{College of Computer Science, Zhejiang University}}
\email{bjj@zju.edu.cn}

\author{Jieyu Yang}
\affiliation{\institution{College of Computer Science, Zhejiang University}}
\email{	yangjieyu@zju.edu.cn}

\author{Zhen Zhang}
\affiliation{\institution{College of Computer Science, Zhejiang University}}
\email{zhen_zhang@zju.edu.cn}

\author{Chengwei Yao}
\affiliation{\institution{College of Computer Science, Zhejiang University}}
\email{yaochw@zju.edu.cn}

\author{Zhi Yu}
\affiliation{\institution{College of Computer Science, Zhejiang University}}
\email{yuzhirenzhe@zju.edu.cn}

\author{Sheng Zhou}
\affiliation{\institution{College of Computer Science, Zhejiang University}}
\email{zhousheng_zju@zju.edu.cn}

\author{Xifeng Yan}
\affiliation{\institution{University of California, Santa Barbara}}
\email{xyan@cs.ucsb.edu}



\begin{abstract}
  Graph classification aims to extract accurate information from graph-structured data for classification and is becoming more and more important in graph learning community. 
  Although Graph Neural Networks (GNNs) have been successfully applied to graph classification tasks, most of them overlook the scarcity of labeled graph data in many applications. 
  For example, in bioinformatics, obtaining protein graph labels usually needs laborious experiments. 
  Recently, few-shot learning has been explored to alleviate this problem with only a few labeled graph samples of test classes.
  The shared sub-structures between training classes and test classes are essential in few-shot graph classification. Existing methods assume that the test classes belong to the same set of super-classes clustered from training classes.
  However, according to our observations, the label spaces of training classes and test classes usually do not overlap in real-world scenario.
  As a result, the existing methods don't well capture the local structures of unseen test classes.
  To overcome the limitation, in this paper, we propose a direct method to capture the sub-structures with well initialized meta-learner within a few adaptation steps.
 More specifically, (1) we propose a novel framework consisting of a graph meta-learner, which uses GNNs based modules for fast adaptation on graph data, and a step controller for the robustness and generalization of meta-learner; (2) we provide quantitative analysis for the framework and give a graph-dependent upper bound of the generalization error based on our framework; (3) the extensive experiments on real-world datasets demonstrate that our framework gets state-of-the-art results on several few-shot graph classification tasks compared to baselines.
\end{abstract}

\begin{CCSXML}
<ccs2012>
<concept>
<concept_id>10010147</concept_id>
<concept_desc>Computing methodologies</concept_desc>
<concept_significance>500</concept_significance>
</concept>
<concept>
<concept_id>10010147.10010257.10010293.10010294</concept_id>
<concept_desc>Computing methodologies~Neural networks</concept_desc>
<concept_significance>500</concept_significance>
</concept>
</ccs2012>
\end{CCSXML}

\ccsdesc[500]{Computing methodologies}
\ccsdesc[500]{Computing methodologies~Neural networks}

\keywords{graph data mining, few-shot classification, meta-learning, graph neural networks}


\maketitle
\section{Introduction}
Many real-world networks can be formulated as graphs for modeling different relationships among nodes, such as social networks, chemical molecule structures and citation networks. Recently, there have been various attempts to extend Convolutional Neural Network (CNNs) and pooling methods to graph-structured data. These methods are named as Graph Neural Networks (GNNs) and have been successfully applied to different graph related tasks containing graph classification, node classification and link prediction \cite{zhang2018deep,zhou2018graph}.
Graph classification aims to extract accurate information from graph-structured data for classification. However, most existing GNNs based graph classification methods overlook that it's complicated and time consuming to collect or label the graph data. Learning with few labeled graph data is still a challenge for the practical applications of graph classification. 
\begin{figure*}[!t]\small

\centering
\subfigure[\scriptsize{Graph spectral measures method \cite{Chauhan2020FEW-SHOT}.}]{
\label{fig:gmsdemo}
\centering
\includegraphics[width=0.45\textwidth]{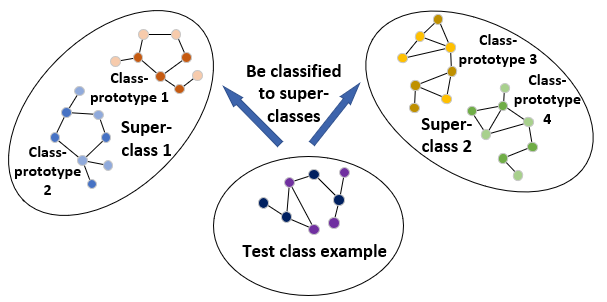}
}
\subfigure[\scriptsize{Our method.}]{
\label{fig:ourdemo}
\centering
\includegraphics[width=0.45\textwidth]{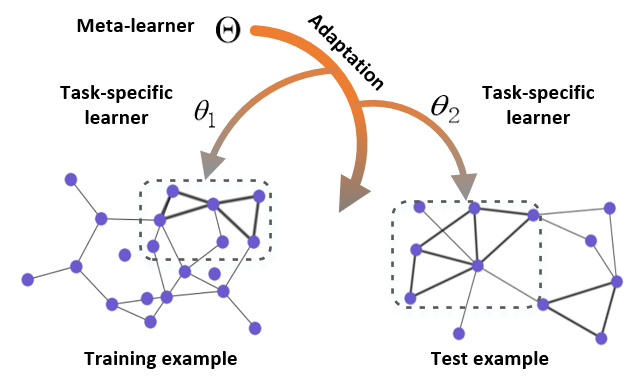}
}
\caption{Comparison of different methods. (a) The method from the global structure of dataset. The most representative graph of each class is viewed as class-prototype graph. The super-classes are clustered from training classes. (b) Our method from the view of local structure. $\Theta$ is meta-learner's parameters and $\theta_{1},\theta_{2}$ are task specific parameters derived from meta-learner's fast adaptation. The components in dotted boxes have similar triangle structure. We assume the similarity can be discovered by a well initialized meta-learner within a few adaptation steps.}
\label{fig:first_image}
\end{figure*} 

Few-shot learning, aiming to label the query data when given only a few labeled support data, is a natural way to alleviate the problem. There are many papers discussing few-shot learning with meta-learning \cite{FinnAL17},  data augmentation \cite{NIPS2018_7504} or regularization \cite{Yin2020Meta-Learning}, but most of them don't consider the graph data. 
Furthermore, there have been several methods for few-shot node classification \cite{SatorrasICLR2018graph,Kim_2019_CVPR,meta_gnn} and few-shot link prediction \cite{NIPS2017_7266,recommendation,MeLU,adCold-Start}, but they only focus on node-level embedding. Recently, Chauhan et al. \cite{Chauhan2020FEW-SHOT} proposed few-shot graph classification based on graph spectral measures and got satisfactory performance. 
From the global structure of dataset, they \textit{bridge} the test classes and training classes by assuming that the test classes belong to the same set of super-classes clustered from training classes. However, the above methods based on graph spectral measures might have some limitations for the following reasons: (1) the label spaces of training classes and test classes usually do not overlap in few-shot settings; (2) the bridging methods above may diminish the model to capture the local structure of test data.

From the perspective of the graph's local structure, we observe that the graphs of training classes and test classes have similar sub-structures. For example, different social networks usually have similar groups; different protein molecules usually have similar spike proteins. We assume these similarities can be discovered by a well initialized meta-learner within a few adaptation steps. 
Therefore, we consider fast adaptation by a meta-learner from learned graph classification tasks to new tasks. Figure \ref{fig:first_image} illustrates the existing method and our assumption. 

Currently, GNNs have reliable ability to capture local structures over graphs by convolutional operations and pooling operations, but lack of fast adaptation mechanism when dealing with never seen graph classes. Inspired by Model Agnostic Meta-Learning (MAML, \cite{FinnAL17}), which has attracted great attention because of its fast adaptation mechanism, we leverage GNNs as graph embedding backbone and meta-learning as a training paradigm to rapidly capture task-specific knowledge in graph classification tasks and transfer them to new tasks.

However, directly applying MAML for fast adaptation is suboptimal due to the following reasons: (1) MAML requires painstaking hyperparameter searches to stabilize training and achieve high generalization \cite{antoniou2018how}; (2) unlike images, graphs have arbitrary node size and sub-structure, which brings uncertainty for adaptation. There have some variants of MAML trying to overcome these problems by incorporating an online hyperparameter adaptation \cite{behl2019alpha}, reducing optimization difficulty \cite{nichol2018firstorder} or increasing context parameters for adaptation \cite{zintgraf2019fast}, but they don't consider the structure of graph data. In this paper, we design a novel component named as adaptive step controller to learn optimal adaptation step for meta-learner to improve its learning robustness and generalization. The controller evaluates the meta-learner and decides when to stop adaptation by two kinds of inputs: (1) graphs' embedding quality, which is viewed as a meta-feature and indicated with Average Node Information (ANI, the average amount of node information in a batch of graphs); (2) meta-learner's training state, which is indicated with training loss of classification. 

We formulate our framework as \textbf{A}daptive \textbf{S}tep MAML (AS-MAML). To the best of our knowledge, we are the first to consider the few-shot graph classification problem from the view of graph's local structure and propose a fast adaptation mechanism on graphs via meta-learning. Our contributions are summarized as follows:
\begin{itemize}
\item We propose a novel GNNs based graph meta-learner, which captures the features efficiently of sub-structures on unseen graphs by fast adaptation mechanism.
\item We design a novel controller for meta-learner. Driven by Reinforcement Learning (RL, \cite{rl}), the controller provide optimal adaptation step for the meta-learner via graph's embedding quality and training loss. Our ablation experiments show its effectiveness to improve learning robustness and generalization.
\item We perform quantitative analysis and provide a generalization guarantee of key algorithms via a graph-dependent upper bound.  

\item We evaluate our framework's performance against different baselines on four graph datasets and achieve state-of-the-art performance in almost all the tasks. We also evaluate the transferability of popular graph embedding modules on our few-shot graph classification tasks.
\end{itemize}
\section{Related Works}
\subsection{Graph Classification}

In graph classification tasks, each full graph is assigned a class label. There exist several branches for graph classification. The first is graph kernel methods which design kernels for the sub-structures exploration and exploitation of graph data. The typical kernels include Shortest-path Kernel \cite{borgwardt2005shortest}, Graphlet Kernel \cite{shervashidze2009efficient} and Weisfeiler-Lehman Kernel \cite{shervashidze2011weisfeiler}.

As the main branch in recent years, GNNs have been successfully applied to graph classification. GNNs focus on node representations, which are iteratively computed by message passing from the features of their neighbor nodes using a differentiable aggregation operation.  GCN \cite{kipf2016semi} proposed Graph Convolutional Neural Network (termed as  GCN) and got satisfying results based on directly feature aggregation from neighborhood nodes. GAT \cite{GAT} imported attention mechanism for graph convolutional operations. GraphSAGE \cite{SAGE} proposed an inductive framework which leverages node features to generate node embeddings efficiently for unseen nodes. In our framework, we use these classical methods to update nodes of graphs, while other methods like Graph Isomorphism Network (GIN)   \cite{xu2018how} are also applicable. 

In the meantime, inspired by pooling in CNNs, a bunch of researchers concentrates on efficient pooling methods for accurate graph summary and computation efficiency.  Beyond pooling layers in CNNs, graph pooling layers can enable GNNs to reason and get global representation from adjacent nodes. More and more evidence shows that graph pooling promotes the graph classification performance  \cite{lee2019self,diehl2019edge,unet}. SAGPool \cite{lee2019self} implemented self-attention pooling on graphs considering both node features and graph topology. EdgePool \cite{diehl2019edge} implemented a localized and sparse pooling transform backed by the notion of edge contraction. Graph U-nets \cite{unet} implemented novel graph pooling and unpooling operations. Based on these operations, they developed a new model containing graph encoder and graph decoder and got satisfactory performance on graph classification tasks.

\subsection{Few-Shot Learning and Meta-Learning}
Few-shot classification aims to learn a model under the circumstances of low sample resources and is usually powered by meta-learning in recent years. Meta-learning was also known as learning to learn, with a meta-learner observing various task learning processes and summarizing meta-knowledge to accelerate the learning efficiency of new tasks.  Baxter et al.  \cite{BaxterBiasLearning} proposed a model to learn inductive bias from the perspective of bias learning, and they analytically showed that the number of examples required of each task decreases as the number of task rises.

Recent meta-learning related works can be classified into three categories: optimization (or gradients) based methods, metric learning based methods and memory based methods. Optimization based methods aim to train a model to learn optimization \cite{RaviICLR2017,LiICML2018}, learn a good initialization \cite{FinnAL17} for rapid adaptation, or train parameter generator for task-specific classifier \cite{RusuICLR2019}. Metric learning based methods aim to learn a feature space shared with new tasks \cite{VinyalsBLKW16,SnellSZ17}. Moreover, memory based methods learn new tasks by reminiscence mechanism in virtue of physical memory \cite{SantoroBBWL16}.

Furthermore, almost all the previous few-shot learning methods are devised for image data, where images are prone to be represented in Euclidean space. Because we all have the idea that CNNs based models can perform efficient transfer in Euclidean space by feature reuse \cite{Raghu2020Rapid}, in virtue of that different images usually share common edge features and corner features.  Graph data such as social networks, which are appropriate to be formed into non-Euclidean space instead of Euclidean space. Few-shot learning in non-Euclidean space is addressed in our work.
\subsection{GNNs and its Generalization on Graph data}
We have seen several works of few-shot node classification promoting performance via GNNs \cite{SatorrasICLR2018graph, Kim_2019_CVPR, liu2019GPN, yang2020dpgn,Yao2020Automated,NIPS2019_8389,yao2019graph,Gidaris_2019_CVPR,Liu2019PrototypePN}, but they just leverage the message passing mechanism of GNNs to enhance the performance on node classification, without involving the generalization of GNNs themselves and compatibility with graph classification task.  For graph classification, Knyazev et al.  \cite{NIPS2019_8673} focus on the ability of attention GNNs to generalize to larger, more complex or noisy graphs. Lee et al. \cite{LeeTransfer} imported domain transfer method by transferring the intrinsic geometric information learned in the source domain to the target. Hu et al. \cite{Hu*2020Strategies} systematically studied the effectiveness of pre-training strategies on multiple graph datasets. Based on graph spectral measures, Chauhan et al. \cite{Chauhan2020FEW-SHOT} proposed few-shot graph classification using the notion of super-graph by two steps: (1) they define the \(p\)-th Wasserstein distance to measure the spectral distance among graphs and select the most representative graph as prototype graph for each class; (2) by clustering the prototype graphs based on spectral distance, they clustered the prototype graph again into a super-graph consisting of super-classes. Therefore, they assume that the test classes belong to the same set of super-classes clustered from the training classes. We loosen the assumption and emphasize fast adaptation to boost few-shot graph classification. 

\section{Problem Setup}
\begin{figure*}[!ht]
  \centering
  \includegraphics[width=0.70\textwidth]{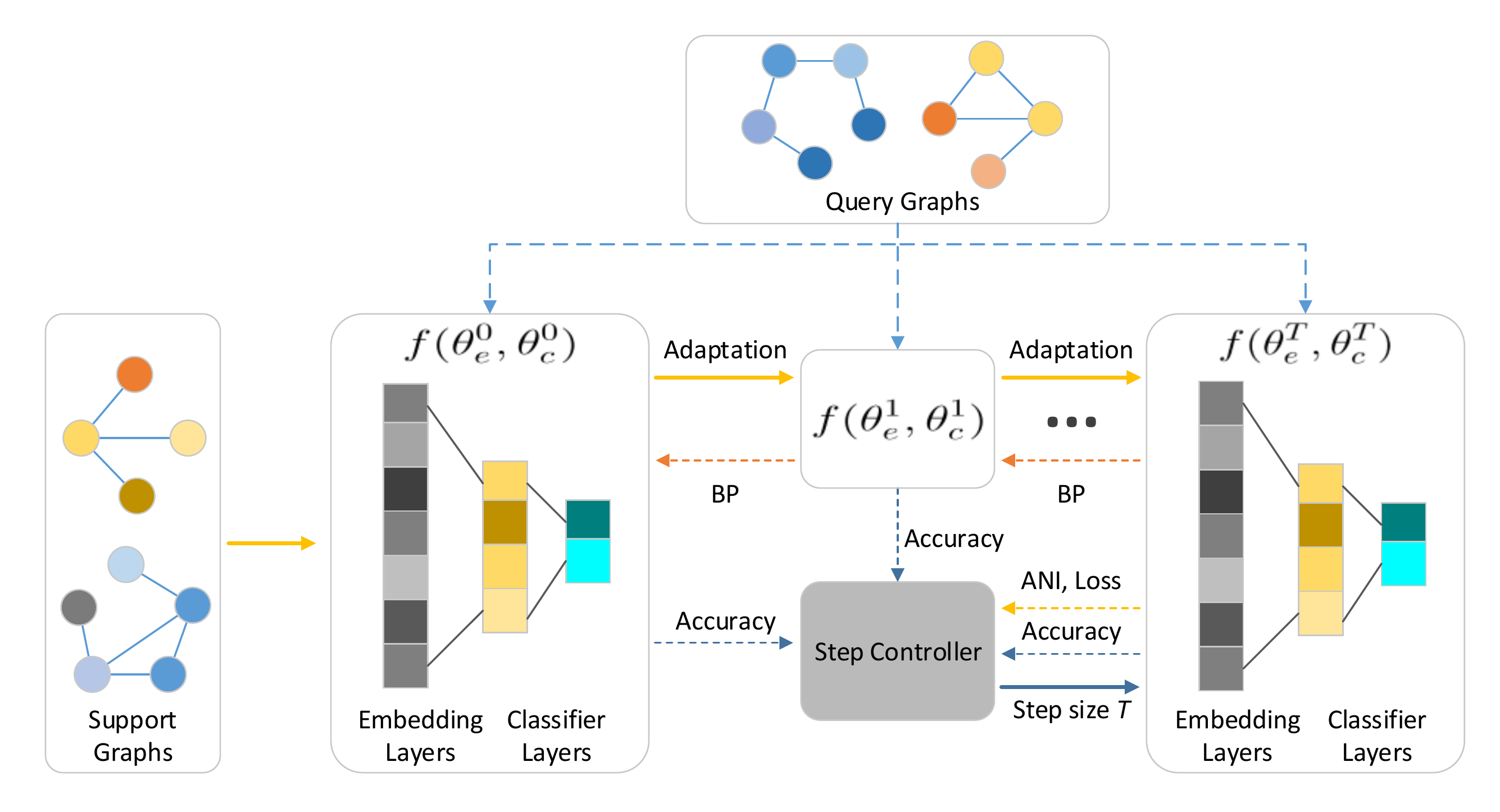}
  \caption{Diagram of the AS-MAML framework's learning process in a single episode on the 2-way-1-shot graph classification task. The yellow arrows show meta-learner's $T$ step adaptations on support graphs. The blue dash arrows show $T$ step evaluations (Accuracies) on the query graphs. The orange dash arrows show the backpropagation (BP) according to $T$-th loss on query graphs. The step controller receives ANIs and classification losses on support graphs of each step. After that, the controller outputs the adaptation step $T$. Finally, the controller receives accuracies on query graphs as rewards and updates its own parameters. }
  \label{fig:fm}
\end{figure*}

We form the few-shot problem as N-way-K-shot graph classification. Firstly, given graph data $\mathcal{G}=\left\{({G}_{1},\mathbf{y}_{1}), ({G}_{2},\mathbf{y}_{2}), \cdots, ({G}_{n},\mathbf{y}_{n})\right\}$, where ${G}_{i}= \left(\mathcal{V}_{i}, \mathcal{E}_{i}, \mathbf{X}_{i}\right)$. We use ${n}_{i}$ to denote the number of node set $ \mathcal{V}_{i}$. So each graph ${G}_{i}$ has an adjacent matrix  $\mathbf{A}_{i} \in \mathbb{R}^{n_{i} \times n_{i}}$ and a node attribute matrix $\mathbf{X}_{i} \in \mathbb{R}^{n_{i} \times d}$, where $d$ is the dimension of node attribute. Secondly, according to label $\mathbf{y}$, we split $\mathcal{G}$ into $\{(\mathcal{G}^{train},\mathbf{y}^{train})\}$ and $\{(\mathcal{G}^{test},\mathbf{y}^{test})\}$ as training set and test set respectively. Notice that $\mathbf{y}^{train}$ and $\mathbf{y}^{test}$ must have no common classes. We use episodic training method, which means at the training stage we sample a task $\mathcal{T}$ each time, and each task contains support data $D^{train}_{sup}={ \{ (G_{i}^{train},\mathbf{y}_{i}^{train}) \} }_{i=1}^{s}$ and query data $D^{train}_{que}={ \{ (G_{i}^{train},\mathbf{y}_{i}^{train}) \} }_{i=1}^{q}$,
where $s$ and $q$ are the number of support data and query data respectively. Given labeled support data, our goal is to predict the labels of query data. Please note that in a single task, support data and query data share the same class space. If $s={N \times K}$, which means that support data contain N classes and K labeled samples per class, we name the problem as N-way-K-shot graph classification. At test stage when performing classification tasks on unseen classes, we firstly fine tune the meta-learner on the support data of test classes $D^{test}_{sup}={ \{ (G_{i}^{test},\mathbf{y}_{i}^{test}) \} }_{i=1}^{s}$ , then we report classification performance on  $D^{test}_{que}={ \{ (G_{i}^{test},\mathbf{y}_{i}^{test}) \} }_{i=1}^{q}$.
\section{Proposed Framework}

Overall, our few-shot graph classification framework consists of GNNs based meta-learner and a step controller to decide the adaptation steps of meta-learner. We use MAML to implement a fast adaptation mechanism for meta-learner because of its model agnostic property. Du et al. \cite{recommendation} proposed an RL based step controller to guide meta-learner for link prediction. We argue that classification loss is suboptimal to be viewed as rewards for overcoming overfitting. Therefore, we adopt a novel step controller to accelerate training and overcome overfitting. Our step controller is also driven by RL but learns the optimal adaptation step by using ANIs and losses as inputs and classification accuracy as rewards. Figure \ref{fig:fm} demonstrates the training process of our framework.

\subsection{Graph Embedding Backbone }
We explain our proposed framework with typical graph convolutional modules and pooling modules as embedding backbone, due to that novel graph convolutional modules or pooling modules are out of concern for this paper. The first step to represent a graph is to embed the nodes it contains. We investigate several embedding methods such as GCN, GAT, GraphSAGE and GIN. Here we focus on GraphSAGE as following reasons: (1) GraphSAGE has more flexible aggregators in few-shot learning scenarios; (2) Errica et at. \cite{Errica2020A} set GraphSAGE as a strong baseline when compared to GIN for graph classification task. Hence we use mean aggregator of GraphSAGE as follows:   
\begin{equation}
\mathbf{h}_{v}^{l} = \sigma\left(\mathbf{W} \cdot \operatorname{mean}\left(\left\{\mathbf{h}_{v}^{l-1}\right\} \cup\left\{\mathbf{h}_{u}^{l-1}, \forall u \in \mathcal{N}(v)\right\}\right)\right.,
\end{equation}
where $\mathbf{h}_{v}^{l}$ is the $l$-th layer representation of node $v$, $\sigma$ is the sigmoid function, $\mathbf{W}$ is the parameters and $\mathcal{N}(v)$ contains the neighborhoods of $v$.  Please note that this mean aggregator just belongs to the group of typical aggregators we use in experiments. We will provide concrete analysis for other aggregators in Section \ref{section:bound}  and Section \ref{section:exp}.
%

After that, we discuss existing pooling operations. Under the circumstances of few-shot learning, the meta-learner urgently needs a flexible pooling strategy with learning capability to strengthen its generalization. Here, we focus on self-attention pooling (SAGPool) \cite{lee2019self} as our pooling layer thanks to its flexible attention parameters. The main step of SAGPool is to calculate the attention score matrix of graph ${G}_{i}$ as follows: 
\begin{equation}
\mathbf{S}_{i}=\sigma\left(\mathbf{\tilde{D}}_{i}^{-\frac{1}{2}} \mathbf{\tilde{A}}_{i} \mathbf{\tilde{D}}_{i}^{-\frac{1}{2}} \mathbf{X}_{i} \mathbf{\Theta}_{a t t}\right),
\end{equation}
where the $\mathbf{S}_{i} \in \mathbb{R}^{n_{i} \times 1}$  indicates the self-attention score, $n_{i}$ is node number of the graph. $\sigma$ is the activation function (e.g., tanh), $\mathbf{\tilde{A}}_{i} \in \mathbb{R}^{n_{i} \times n_{i}}$ is the adjacency matrix with self-connections, $\mathbf{\tilde{D}}_{i} \in \mathbb{R}^{n_{i} \times n_{i}}$ is the diagonal degree matrix of $\mathbf{\tilde{A}}_{i} $, $\mathbf{X}_{i} \in \mathbb{R}^{n_{i} \times d}$ is $n$ input features with dimension $d$, and $\mathbf{\Theta}_{a t t} \in \mathbb{R}^{d \times 1}$ is the learnable parameters of pooling layer.  Based on the attention score, we select top $c<n_{i}$ nodes that have larger scores with keeping their origin edges unchanged.

To get fixed representation dimension for each graph, we need Read-Out operation to form each graph embedding vector into identical dimension.  Following Zhang et al. \cite{zhang2019hierarchical}, we use the concatenation of mean-pooling and max-pooling for each level of graph embeddings of ${G}_{i}$ as follows:

\begin{equation}
\mathbf{r}_{i}^{l}=\mathcal{R}\left(\mathbf{H}_{i}^{l}\right)=\sigma\left(\frac{1}{n_{i}^{l}} \sum_{p=1} \mathbf{H}_{i}^{l}(p,:) \| \max _{q=1}^{d} \mathbf{H}_{i}^{l}(:, q)\right),
\end{equation}
where $\mathbf{r}_{i}^{l} \in \mathbb{R}^{2 d}$ is the $l$-th layer embedding, $n_{i}^{l}$ is the node number in $l$-th layer, $\mathbf{H}_{i}^{l}$ denotes $l$-th layer hidden representation matrix , $\|$ is concatenation operation, $p$ and $q$ are row number and column number respectively, $d$ is feature dimension, and $\sigma$ is the activation function (e.g., Rectified Linear Unit, ReLU \cite{ReLU}).

Following the graph embedding backbone, we compute  the final graph embedding of  ${G}_{i}$ as 
\begin{equation}
\mathbf{z}_{i}= \mathbf{r}_{i}^{1}+\mathbf{r}_{i}^{2}+\cdots+\mathbf{r}_{i}^{L}
\end{equation}
 and put it into Multi-Layer Perceptron (MLP) classifier to perform classification using cross-entropy loss.
 \subsection{Meta-Learner for Fast Adaptation}
We use $\mathbf{\theta}_{e}$ and $\mathbf{\theta}_{c}$ to denote the parameters of graph embedding modules and MLP classifier respectively, where $\mathbf{\theta}_{e}$ contains the parameters of node embedding layers and pooling layers. 
To achieve the fast adaptation of $\mathbf{\theta}_{e}$ and $\mathbf{\theta}_{c}$, we put them into a nested loop framework to create a GNNs based meta-learner.  Specifically, our meta-learner is optimized from two procedures. One of the procedures is called the outer loop aiming to get optimal initialization for new classification tasks, and one is called the inner loop to implement fast adaptation based on a suitable initialization. Algorithm \ref{alg:algorithm1} elaborates on how to train a graph meta-learner at the training state. First, we sample support data $D^{train}_{sup}$ and query data $D^{train}_{que}$ in an episode. Then we perform adaptation operation by updating $\mathbf{\theta}_{e}$ and $\mathbf{\theta}_{c}$ for $T$ steps on $D^{train}_{sup}$. Lines 7 to 8 in Algorithm \ref{alg:algorithm1} demonstrate the adaptation of meta-learner. After adaptation, demonstrated by line 13, we use the losses on $D^{train}_{que}$ to perform backpropagation and update $\mathbf{\theta}_{e}$ as well as $\mathbf{\theta}_{c}$. Similarly, at the test stage, the meta-learner will perform adaptation on labeled support graphs $D^{test}_{sup}$ and predict the label of query graphs $D^{test}_{que}$. 

\begin{algorithm}[tb]
\caption{Training Stage of AS-MAML}
\label{alg:algorithm1}
\begin{flushleft}
\textbf{Input}: Task distribution $p(\mathcal{T})$ over $\{(G^{train},\mathbf{y}^{train})\}$\\
\textbf{Parameter}: Graph embedding parameters $\theta_{e}$, classifier parameters $\theta_{c}$, step controller parameters $\theta_{s} $, learning rate $ \alpha_{1}, \alpha_{2}, \alpha_{3}$ \\
\textbf{Output}: The trained parameters $\theta_{e},\theta_{c},\theta_{s}$

\end{flushleft}

\begin{algorithmic}[1] 
\STATE Randomly initialize $\theta_{e} $, $\theta_{c} $, $\theta_{s} $
\WHILE{not convergence}
\STATE Sample task $\mathcal{T}_{i}$ with support graphs $D^{train}_{sup}$ and query graphs $D^{train}_{que}$
\STATE Get adaptation step $T$ via  Equation \ref{calT}
\STATE Set fast adaptation parameters: $\theta^{\prime}=\theta=\{\theta_{e},\theta_{c}\}$
\FOR {$t = 0 \to T$ }
\STATE Evaluate  $\nabla_{\theta^{\prime}} \mathcal{L}_{\mathcal{T}_{i}}\left(f_{\theta^{\prime}}\right)$ on $D^{train}_{sup}$ by classification loss $\mathbf{L}^{(t)}$.
\STATE Update $\theta^{\prime} $ : $\theta^{\prime} \leftarrow \theta^{\prime}- \alpha_{1} \nabla_{\theta^{\prime}} \mathcal{L}_{\mathcal{T}_{i}}\left(f_{\theta^{\prime}}\right)$
\STATE Calculate ANI $\mathbf{M}^{(t)}$ via Equation \ref{eq:ANIs}
\STATE Calculate stop probability $p(t)$ via Equation \ref{stop_prb}
\STATE Calculate reward ${Q^{(t)}}$ on $D^{train}_{que}$  by Equation \ref{eq:Q}
\ENDFOR
\STATE  $\theta \leftarrow \theta-\alpha_{2} \nabla_{\theta}  \mathcal{L}_{\mathcal{T}_{i}}\left(f_{\theta^{\prime}}\right)$ on $D^{train}_{que}$
\FOR {$t = 0 \to T$ }
\STATE $\theta_{s} \leftarrow \theta_{s} +  \alpha_{3}  Q^{(t)} \nabla_{ \theta_{s}} \ln p(t)$
\ENDFOR
\ENDWHILE
\end{algorithmic}
\end{algorithm}
\subsection{Adaptation Controller}
Finding optimal combinations of learning rates and step size is difficult for MAML \cite{antoniou2018how}. Besides, arbitrary graph size and structure bring difficulty for ascertaining optimal step size manually. As an empirical solution to alleviate these problems, we design an RL based controller to decide optimal step size for the adaptation of meta-learner when given other hyper-parameters. Therefore, our controller must roughly know when to stop adaptation according to the embedding quality and training state (denoted by loss). We focus on Average Node Information (ANI) to indicate the embedding quality. Intuitively, if a node can be well reconstructed by its neighborhoods, it has less information for the graph classification. Similarly, the rising of batch graphs' ANI indicates that the pooling module has learned to select the most informative nodes.  Hou et al. \cite{Hou2020Measuring} proposed similar concept called \textit{Feature Smoothness} measuring node information over graphs.
Here we adopt another practical method defined by \cite{zhang2019hierarchical},  where they compute node information as the Manhattan distance between the node representation itself and the one constructed from its neighbors. Inspired by their work, we define the ANI of a single graph ${G}_{i}$ as follows:
    \begin{equation}
    \label{eq:ANI}
{ANI}_{i}^{l}= \frac{1}{n_{i}^{l}}\sum_{j=1} \left\| \left[ \left(\mathbf{I}_{i}^{l}-\left(\mathbf{D}_{i}^{l}\right)^{-1} \mathbf{A}_{i}^{l}\right) \mathbf{H}_{i}^{l}  \right]_{j} \right\|_{1},
\end{equation}
where $l$ denotes the embedding layer of the graph, $n_{i}^{l}$ denotes the number of node, $j$ denotes the row index of matrix or $j$-th node, ${\| \cdot \|}_{1}$ denotes the L1 norm of row vector,  $\mathbf{A}_{i}^{l}$ denotes the adjacency matrix, $\mathbf{D}_{i}^{l}$ is the degree matrix of $\mathbf{A}_{i}^{l}$, $\mathbf{H}_{i}^{l}$ denotes $l$-th layer hidden representation matrix. In our work, we only use the last layer of graph embedding (i.e., $\mathbf{H}_{i}^{L}$). And unless specifically stated, we use scalar value ANI to denote the average node information of the batch graphs:
 \begin{equation}
 \label{eq:ANIs}
ANI=1/n*\sum_{i=1}^{n} ANI_{i}^{L},
\end{equation}
where $n$ denotes the number of batch graphs, $L$ denotes the L-th layer of graph embedding.

Next we set the number of initial step as $T_{i}$, the ANIs in $T_{i}$ steps as $\mathbf{M} \in \mathbb{R}^{T_{i} \times 1}$ and denote classification losses as $\mathbf{L} \in \mathbb{R}^{T_{i} \times 1}$. Then we compute stop  probability $p^{(t)}$ at step $t$ as follows:
\begin{equation}
\label{stop_prb}
\boldsymbol{h}^{\left(t\right)}=\operatorname{LSTM}\left(\left[\mathbf{L}^{(t)},\mathbf{M}^{(t)}\right], \boldsymbol{h}^{(t-1)} \right),  \\
p^{(t)}=\sigma\left(\mathbf{W}\mathbf{h}^{(t)}+\mathbf{b}\right),
\end{equation}
where $\mathbf{W}$ and $\mathbf{b}$ are the parameters of a MLP module, $\sigma$ is sigmoid function and $h^{(t)}$ is the output of LSTM module. Note that the adaptation will not stop until $T_{i}$ steps at current task regardless of $p^{(t)}$. We set $p^{(t)}$ as a prior for the next task:
\begin{equation}
\label{calT}
T_{i+1} =  \left \lfloor \frac{1}{p^{(T_{i})}} \right \rfloor ,
\end{equation}where $ \lfloor \rfloor$ is round down operation. We can observe that we compute $T$ of next task according to the ANIs and losses, where both of them are produced by current task. The reason behind it is that if we stop adaptation according to $p^{(t)}$ in current task, it will output  larger variance of step and bring instability for optimization.  
Besides, we get controller's rewards as following:

\begin{equation}
\label{eq:Q}
Q^{(t)}=\sum_{t=1}^{T} r^{(t)}=\sum_{t=1}^{T}(e_{T}-e_{t} - \eta*t),
\end{equation}
where $T$ is total steps and $e_{t}$ is the classification accuracy on query data at step $t$, and $\eta*t$ denotes the penalty item.  Then we update our controller by policy gradients, which is a typical method in RL: 
\begin{equation}
\theta_{s} = \theta_{s} +  \alpha_{3}  Q^{(t)}\nabla_{ \theta_{s}} \ln p(t),
\end{equation}where $\ln p(t)$ is the log of  stop probability p(t) $\cdot$, $\nabla_{ \theta_{s}}$ is the gradients over $\theta_{s}$ and $\alpha_{3}$ is learning rate. 

 \section{Graph structure-dependent bound for meta-learning}
 \label{section:bound}
In this section, we theoretically analyze the proposed framework with 
unnecessary details omitted. Driven by graph data, we will build a generalization error bound in meta-learning scenario considering the locality of graph structure (e.g., the degrees of nodes in a graph). Before getting closer to analysis, we first give the key results: (1) The bound is dependent on the number of graphs in the support example of test classes and the number of graphs of training classes.  (2) The bound is dependent on the locality structure of graphs between training classes and test classes. The first result is common in meta-learning scenarios, while the second result is derived from the structure of graphs.

From the perspective of representation learning and probability distribution, our framework try to minimize the distance of latent distributions between $\mathcal{G}^{train} $ and $\mathcal {G}^{test} $ directly.  Based on this premise, we use the integral probability metric (IPM, \cite{IPM}) as a general distance metric to produce the upper bound. IPM has been employed in the analysis of transfer learning \cite{10.5555/2999325.2999503}, non-parametric estimation \cite{NPE} and generative models \cite{zhang2018on}. Generally, IPM can be formalized as:

\begin{equation}
\gamma_{\mathcal{H}}(\mathbb{P}, \mathbb{Q}):=\sup _{h \in \mathcal{H}}\left|\mathbb{E}_{\mathbb{P}} h(z)-\mathbb{E}_{\mathbb{Q}} h(z)\right|,
\end{equation}
where $\mathbb{P}$ and $\mathbb{Q}$ denote two different probability distributions, respectively,  $\mathcal{H}$ denotes the collection of real-valued functions (e.g., square loss function and margin loss function ). 

We set the empirical distribution of $\mathcal{G}^{train} $ as  $\hat{\mathbb{P}}$.  With a slight abuse of notation, in a single task at the test stage, we set the empirical distribution of support graph $D^{test}_{sup}$ as $\hat{\mathbb{Q}}$, and set the expected distribution query data $D^{test}_{que}$ as $\mathbb{Q}$. Then in the adaptation process at test stage, we actually want to  minimize the following bound:

\begin{equation}
\gamma_{\mathcal{H}}(\hat{\mathbb{P}}, {\mathbb{Q}}) = \sup _{h \in \mathcal{H}}\left|\mathbb{E}_{\hat{\mathbb{P}}} h({G})-\mathbb{E}_{{\mathbb{Q}}} h({G})\right|,
\label{equation:PhatQ}
\end{equation}
where $h(G)$ is the classification loss of meta-learner after the adaptation on $D^{test}_{sup}$.  
Then with the help of IPM, $\gamma_{\mathcal{H}}(\hat{\mathbb{P}}, {\mathbb{Q}}) $can be bounded by following theorem \cite{cai2020weighted}:
\begin{theorem}
Let $\mathcal{H}$ denote a class of functions whose members map from ${G}_{i}$ to [a, b], and suppose that the training data test data are independent, and that the data instances of each are i.i.d. within a sample. Let $\epsilon > 0 $. Then with probability at least $1 - \epsilon$ over the draws of the training and query samples, 
\begin{equation}
\gamma_{\mathcal{H}}\left(\hat{\mathbb{P}}, \mathbb{Q}\right) \leq \gamma_{\mathcal{H}}\left(\hat{\mathbb{P}}, \hat{\mathbb{Q}}\right)+2 \mathcal{R}\left( \mathcal{H} | \{{G}_{i}\}^{m}_{i=1} \right)+3 \sqrt{\frac{(b-a)^{2} \log (2 / \epsilon)}{2m}},
\end{equation}
where $m$ is the number of support graphs $D^{test}_{sup}$,  $\mathcal{R}\left( \mathcal{H} |\{{G}_{i}\}^{m}_{i=1} \right)$ denotes the empirical Rademacher complexity \cite{Radermacher_com} of the function class $\mathcal{H}$  w.r.t. the support graphs.
\label{theorem1}
\end{theorem}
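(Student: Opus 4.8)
The plan is to read $\gamma_{\mathcal{H}}(\hat{\mathbb{P}},\mathbb{Q})$ as a generalization gap and route it through the empirical support distribution $\hat{\mathbb{Q}}$. The structural fact that makes this work is that, within a test task, the support graphs $D^{test}_{sup}$ and query graphs $D^{test}_{que}$ obey the same class-conditional law, so $\hat{\mathbb{Q}}$ is exactly the $m$-sample empirical version of the population $\mathbb{Q}$ and a uniform deviation bound applies. First I would invoke the triangle inequality that every IPM inherits pointwise: for each fixed $h \in \mathcal{H}$,
\[
\left|\mathbb{E}_{\hat{\mathbb{P}}} h-\mathbb{E}_{\mathbb{Q}} h\right| \leq \left|\mathbb{E}_{\hat{\mathbb{P}}} h-\mathbb{E}_{\hat{\mathbb{Q}}} h\right| + \left|\mathbb{E}_{\hat{\mathbb{Q}}} h-\mathbb{E}_{\mathbb{Q}} h\right|,
\]
and taking $\sup_{h\in\mathcal{H}}$ on both sides yields $\gamma_{\mathcal{H}}(\hat{\mathbb{P}},\mathbb{Q}) \le \gamma_{\mathcal{H}}(\hat{\mathbb{P}},\hat{\mathbb{Q}}) + \gamma_{\mathcal{H}}(\hat{\mathbb{Q}},\mathbb{Q})$. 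This isolates the empirical term already present in the statement and reduces everything to controlling the residual $\Phi := \gamma_{\mathcal{H}}(\hat{\mathbb{Q}},\mathbb{Q}) = \sup_{h\in\mathcal{H}} |\frac{1}{m}\sum_{i=1}^m h(G_i) - \mathbb{E}_{\mathbb{Q}} h|$.

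Next I would bound $\Phi$ as a one-sided uniform law of large numbers over $\mathcal{H}$ restricted to $[a,b]$. The two ingredients are (i) a symmetrization step introducing a ghost sample and Rademacher variables $\sigma_i$ to obtain $\mathbb{E}[\Phi] \le 2\,\mathbb{E}\big[\mathcal{R}(\mathcal{H}\mid\{G_i\}_{i=1}^m)\big]$, and (ii) a concentration step. Since every $h$ maps into $[a,b]$, replacing a single graph $G_i$ perturbs $\Phi$ by at most $(b-a)/m$, so McDiarmid's bounded-difference inequality gives $\Phi \le \mathbb{E}[\Phi] + \sqrt{(b-a)^2\log(2/\epsilon)/(2m)}$ with probability at least $1-\epsilon/2$.

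Because the statement uses the \emph{empirical} complexity $\mathcal{R}(\mathcal{H}\mid\{G_i\}_{i=1}^m)$ rather than its expectation, a final step passes from $\mathbb{E}[\mathcal{R}]$ to $\mathcal{R}$: the empirical Rademacher complexity also satisfies bounded differences with constant $(b-a)/m$, so a second McDiarmid application gives $\mathbb{E}[\mathcal{R}] \le \mathcal{R}(\mathcal{H}\mid\{G_i\}_{i=1}^m) + \sqrt{(b-a)^2\log(2/\epsilon)/(2m)}$ with probability at least $1-\epsilon/2$. Chaining the two high-probability events by a union bound and substituting into the symmetrization estimate yields $\Phi \le 2\,\mathcal{R}(\mathcal{H}\mid\{G_i\}_{i=1}^m) + 3\sqrt{(b-a)^2\log(2/\epsilon)/(2m)}$, which combined with the triangle inequality is exactly the claim. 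I expect the main obstacle to be the bookkeeping in this last paragraph: one must split the failure probability as $\epsilon/2+\epsilon/2$, keep track that the factor $2$ from symmetrization multiplies the complexity term while it is the \emph{expected} complexity that gets concentrated, and confirm the bounded-difference constants so that the direct deviation term and the doubled Rademacher-concentration term collapse into precisely the coefficient $3$ rather than a looser constant.
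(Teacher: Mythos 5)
Your proposal is correct and follows essentially the same route as the paper: the identical triangle-inequality decomposition of $\gamma_{\mathcal{H}}(\hat{\mathbb{P}},\mathbb{Q})$ through the empirical support distribution $\hat{\mathbb{Q}}$, followed by a uniform-deviation bound on $\sup_{h\in\mathcal{H}}\left|\mathbb{E}_{\hat{\mathbb{Q}}}h-\mathbb{E}_{\mathbb{Q}}h\right|$. The only difference is that the paper simply cites that uniform-deviation result (its Theorem 2, attributed to the Rademacher-complexity literature) while you rederive it via symmetrization and two applications of McDiarmid's inequality, and your accounting of the constants (the factor $2$ on the complexity term and the coefficient $3$ on the deviation term) is consistent with the cited statement.
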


Now we provide the proof of Theorem \ref{theorem1} referencing Cai et al. \cite{cai2020weighted}. First, IPM is interrelated with  \textit{uniform deviation with empirical Rademacher complexity} \cite{Radermacher_com}:
\begin{theorem}
Let $\mathcal{H}$ denote a class of functions whose members map from ${z}_{i}$ to [a, b], and suppose that $\{{z}_{i}\}^{m}_{i=1}$ is sampled from a i.i.d distribution $\mathbb{P}$. Then for $\epsilon > 0 $. Then with probability at least $1 - \epsilon$ over the sample, 
\begin{equation}
\sup _{h \in \mathcal{H}}\left|\mathbb{E}_{\mathbb{\hat{P}}} h(z)-\mathbb{E}_{\mathbb{P}} h(z)\right| \leq 2 \mathcal{R}\left( \mathcal{H} | \{{z}_{i}\}^{m}_{i=1} \right)+3 \sqrt{\frac{(b-a)^{2} \log (2 / \epsilon)}{2m}},
\label{theorem2}
\end{equation}where $\mathbb{\hat{P}}$  represents the empirical distribution of the sample, $ \mathcal{R}\left( \mathcal{H} | \{{z}_{i}\}^{m}_{i=1} \right)$ denotes the empirical Rademacher complexity of the function class $\mathcal{H}$  w.r.t. the sample. 
\end{theorem}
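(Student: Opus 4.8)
The plan is to prove this as the classical uniform-deviation (Rademacher) bound, via a bounded-differences argument combined with symmetrization. First I would define the sample functional $\Phi(z_1,\dots,z_m)=\sup_{h\in\mathcal{H}}\left|\mathbb{E}_{\mathbb{P}}h(z)-\tfrac{1}{m}\sum_{i=1}^{m}h(z_i)\right|$, which equals the left-hand side of the claim since $\mathbb{E}_{\hat{\mathbb{P}}}h=\tfrac{1}{m}\sum_i h(z_i)$. The first observation is that $\Phi$ satisfies a bounded-differences condition: replacing a single $z_i$ by some $z_i'$ perturbs the empirical mean of any fixed $h$ by at most $(b-a)/m$ because $h$ takes values in $[a,b]$, and a supremum of quantities each moving by at most this amount moves by at most this amount, so the per-coordinate constant is $c_i=(b-a)/m$. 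McDiarmid's inequality then yields, with probability at least $1-\epsilon/2$, that $\Phi\le\mathbb{E}[\Phi]+(b-a)\sqrt{\log(2/\epsilon)/(2m)}$, since $\sum_i c_i^2=(b-a)^2/m$.

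The heart of the argument is the symmetrization step bounding $\mathbb{E}[\Phi]$ by the expected Rademacher complexity. I would introduce an independent ghost sample $\{z_i'\}_{i=1}^{m}$ drawn from $\mathbb{P}$, write $\mathbb{E}_{\mathbb{P}}h(z)=\mathbb{E}_{z'}\tfrac{1}{m}\sum_i h(z_i')$, and push the ghost expectation through the supremum by Jensen's inequality to get $\mathbb{E}[\Phi]\le\mathbb{E}\sup_{h}\left|\tfrac{1}{m}\sum_i\bigl(h(z_i')-h(z_i)\bigr)\right|$. Because $z_i$ and $z_i'$ are i.i.d., swapping them leaves the joint law unchanged, so each difference may be multiplied by an independent Rademacher sign $\sigma_i\in\{\pm1\}$ without changing the expectation; splitting the sum and applying the triangle inequality then gives $\mathbb{E}[\Phi]\le 2\,\mathbb{E}_{z,\sigma}\sup_{h}\left|\tfrac{1}{m}\sum_i\sigma_i h(z_i)\right|=2\,\mathbb{E}\!\left[\mathcal{R}\bigl(\mathcal{H}\mid\{z_i\}_{i=1}^{m}\bigr)\right]$, the expectation of the empirical Rademacher complexity.

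The final step replaces this expectation by the empirical Rademacher complexity appearing in the statement. Regarding $\mathcal{R}(\mathcal{H}\mid\{z_i\}_{i=1}^{m})$ as a function of the sample, the same constant $(b-a)/m$ controls its bounded differences, since a single coordinate change perturbs every $\tfrac{1}{m}\sum_i\sigma_i h(z_i)$, and hence the supremum and its $\sigma$-average, by at most $(b-a)/m$. A second application of McDiarmid gives, with probability at least $1-\epsilon/2$, that $\mathbb{E}[\mathcal{R}]\le\mathcal{R}(\mathcal{H}\mid\{z_i\}_{i=1}^{m})+(b-a)\sqrt{\log(2/\epsilon)/(2m)}$. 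Taking a union bound over the two exceptional events and assembling the three pieces, the factor $2$ from symmetrization multiplies the Rademacher term while the two square-root terms combine as $(1+2)(b-a)\sqrt{\log(2/\epsilon)/(2m)}$, producing exactly the stated bound with its constant $3$ and the $\log(2/\epsilon)$ inherited from the union bound.

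I expect the symmetrization step to be the main obstacle: one must justify introducing the ghost sample and, in particular, the claim that attaching independent signs $\sigma_i$ to the terms $h(z_i')-h(z_i)$ preserves the expectation, which rests on the exchangeability of each pair $(z_i,z_i')$. A secondary but genuine point is verifying that this argument survives the absolute value sitting inside the supremum, which requires either taking $\mathcal{R}$ to be defined with an absolute value or inserting an extra triangle-inequality split into one-sided deviations; either way the factor $2$ is preserved and the conclusion is unaffected.
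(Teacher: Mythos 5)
Your proof is correct, and the constants assemble exactly as you describe: one McDiarmid deviation of $(b-a)\sqrt{\log(2/\epsilon)/(2m)}$ for the supremum functional $\Phi$, a factor of $2$ from ghost-sample symmetrization, and a second McDiarmid deviation for passing from the expected to the empirical Rademacher complexity, which the symmetrization factor doubles, giving $2+1=3$. The important thing to know, however, is that the paper does not prove this statement at all: it quotes it as a known result from the Rademacher-complexity literature and uses it purely as a lemma. The proof environment that follows the statement in the paper is actually the proof of the paper's Theorem~1, namely the triangle-inequality decomposition $\gamma_{\mathcal{H}}(\hat{\mathbb{P}},\mathbb{Q}) \le \gamma_{\mathcal{H}}(\hat{\mathbb{P}},\hat{\mathbb{Q}}) + \sup_{h}\bigl|\mathbb{E}_{\hat{\mathbb{Q}}}h - \mathbb{E}_{\mathbb{Q}}h\bigr|$, into whose last term this theorem is substituted. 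So your argument does not parallel anything in the paper; it supplies the foundation the paper takes on citation, which is the standard trade-off: the paper buys brevity, your version buys self-containedness. Your two flagged concerns are handled the way you anticipate: the sign insertion is legitimate because, for each fixed sign pattern, swapping the exchangeable pair $(z_i, z_i')$ is a measure-preserving change of the joint law; and the absolute value inside the supremum is compatible with bounded differences via $\bigl||A|-|B|\bigr|\le|A-B|$, provided the empirical Rademacher complexity is read with the absolute value inside (the paper never fixes a convention, so this costs nothing; if one insists on the one-sided definition, the naive split into two one-sided deviations would degrade $\log(2/\epsilon)$ to $\log(4/\epsilon)$, a point worth stating explicitly if you write this up in full).
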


\begin{proof}
Then Equation \ref{equation:PhatQ} is unfolded as:
\begin{equation}
\begin{split}
\gamma_{\mathcal{H}}(\hat{\mathbb{P}}, {\mathbb{Q}}) &= \sup _{h \in \mathcal{H}}\left|\mathbb{E}_{\hat{\mathbb{P}}} h({G})  -\mathbb{E}_{{\mathbb{Q}}} h({G})\right|  \\
&= \sup _{h \in \mathcal{H}}\left|\mathbb{E}_{\hat{\mathbb{P}}} h({G}) + \mathbb{E}_{{\hat{\mathbb{Q}}}} h({G}) -\mathbb{E}_{\hat{\mathbb{Q}}} h({G}) -\mathbb{E}_{{\mathbb{Q}}} h({G})\right| \\
& \leq \sup _{h \in \mathcal{H}}\left[ \left|\mathbb{E}_{\hat{\mathbb{P}}} h({G}) - \mathbb{E}_{{\hat{\mathbb{Q}}}} h({G}) \right| + \left|
\mathbb{E}_{\hat{\mathbb{Q}}} h({G}) -
\mathbb{E}_{{\mathbb{Q}}} h({G}) \right| \right] \\
& \leq \sup _{h \in \mathcal{H}} \left|\mathbb{E}_{\hat{\mathbb{P}}} h({G}) - \mathbb{E}_{{\hat{\mathbb{Q}}}} h({G}) \right| + 
\sup _{h \in \mathcal{H}} \left| \mathbb{E}_{\hat{\mathbb{Q}}} h({G}) -
\mathbb{E}_{{\mathbb{Q}}} h({G}) \right|  \\
&=\gamma_{\mathcal{H}}\left(\hat{\mathbb{P}}, \hat{\mathbb{Q}}\right) + \sup _{h \in \mathcal{H}} \left| \mathbb{E}_{\hat{\mathbb{Q}}} h({G}) -
\mathbb{E}_{{\mathbb{Q}}} h({G}) \right|
\end{split}
\label{equation:proof}
\end{equation}When the last term of Formula \ref{equation:proof} substituted by Theorem \ref{theorem2},  Theorem \ref{theorem1} is proved.
\end{proof}

After giving the proof of Theorem \ref{theorem1}, $\gamma_{\mathcal{H}}\left(\hat{\mathbb{P}}, \hat{\mathbb{Q}}\right)$ can be optimized by the adaptation of meta-learner on the test stage.  As the final step, we focus on the evaluation of $\mathcal{R}\left( \mathcal{H} |\{{G}_{i}\}^{m}_{i=1} \right)$. The diversity of graph structure brings difficulty for directly calculating this term. We adopt the method proposed by K. Garg et al. \cite{garg2020generalization},  who gave an upper bound for $\mathcal{R}\left( \mathcal{H} |\{{G}_{i}\}^{m}_{i=1} \right)$, by transforming each graph to the corresponding collection of local computation trees. Because when updating every node embedding, the node to be updated can be viewed as a root node of a tree, with its neighbor nodes as children.  Given this hypothesis, they derived a strict upper bound mainly based on the degree of nodes, where the degree was used to denote the local complexity of graph. Furthermore, the local complexity of graph is made full use by our model. In the experiment of Section \ref{comparisonGSM}, our model performs better over datasets which have 
clear local structure.

\section{Experiments}
In the experiments, we focus on two aspects: (1)How does the framework perform on few-shot graph classification tasks? (2) How does the controller work when training meta-learner? In this section, we will introduce experiment datasets, comparison with baselines and details of implementation. Finally, we demonstrate the effectiveness of key modules by ablation study and detail analysis.
\subsection{Datasets}
We select four public graph datasets including COIL-DEL, R52, Letter-High and  TRIANGLES. These datasets are publicly available \footnote{https://ls11-www.cs.tu-dortmund.de/staff/morris/graphkerneldatasets} \footnote{https://www.cs.umb.edu/~smimarog/textmining/datasets/}. The statistics are summarized in Table \ref{tab:1}. The visualization of each datasets are shown in Figure \ref{fig:data}. In the work proposed by \cite{Chauhan2020FEW-SHOT}  for few-shot graph classification, they  focus on two datasets containing Letter-High and  TRIANGLES. We use two additional datasets where Graph-R52 was built from text classification dataset and  COIL-DEL was built from images. 
\paragraph{COIL-DEL} COIL-DEL is built on images, and each graph is constructed by applying corner detection and Delaunay triangulation to corresponding image \cite{COIL}. 
\paragraph{R52} R52 is a text dataset in which each text is viewed as a graph. We transformed it into a graph dataset as follows: if two words appear together in a specified sliding window, they have an undirected edge in the graph. We keep classes with more than 20 samples and finally get 28 classes. We name the new dataset as Graph-R52 for clarity.
\paragraph{Letter-High} Each graph represents distorted letter prototype drawings with representing lines by undirected edges and ending points of lines by nodes \cite{COIL}. More specifically, Letter-High contains 15 classes from English alphabets: A, E, F, H, I, K, L, M, N, T, V, W, X, Y, Z. 
\paragraph{TRIANGLES} The dataset was proposed for the task of triangle counting, where the model is required to give the number of triangles of each graph. TRIANGLES contains 10 different graph classes numbered from 1 to 10 corresponding to the number of triangles in graphs of each class. In our experiments, we use the partition of \cite{Chauhan2020FEW-SHOT},  where they remove oversize graph samples so the total sample size of TRIANGLES is reduced from 45000 to 2000.
\begin{table}[!tbp]
\centering
\begin{tabular}{ccccccc}  
\toprule
Datasets   & $|G|$     & Avg.$|\mathcal{V}|$     & Avg.$|\mathcal{E}|$ & $C_{0}$ & $C_{1}$ & $C_{2}$ \\
\midrule
COIL-DEL   & 3900  & 21.54  & 54.24  & 60  & 16  & 20    \\
Graph-R52        & 8214   & 30.92    & 165.78    & 18  & 5  & 5   \\
Letter-High      & 2250   & 4.67     & 4.50      & 11  & 0  & 4   \\
TRIANGLES        & 45000  & 20.85    & 35.50     & 7   & 0  & 3   \\
\bottomrule
\end{tabular}
\caption{Statistics of datasets. For each dataset, we show total graph number $|G|$, average node number Avg.$|\mathcal{V}|$, Average edge number Avg.$|\mathcal{E}|$ and class number for training ($C_{0}$), validation ($C_{1}$) and test ($C_{2}$). }
\label{tab:1}
\end{table}
\begin{figure}[!t]\small
\centering
\subfigure[\scriptsize{COIL-DEL}]{
\label{fig:example1}
\centering
\includegraphics[width=0.22\textwidth]{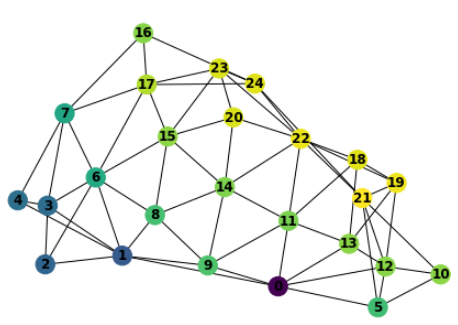}
}
\subfigure[\scriptsize{Graph-R52}]{
\label{fig:example2}
\centering
\includegraphics[width=0.22\textwidth]{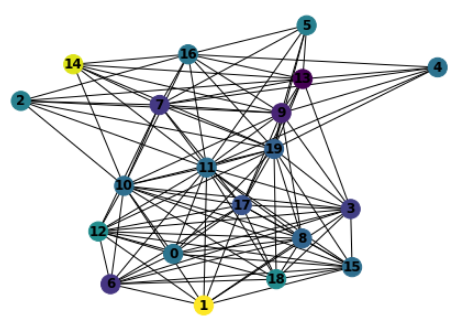}
}
\subfigure[\scriptsize{TRIANGLES}]{
\label{fig:example3}
\centering
\includegraphics[width=0.22\textwidth]{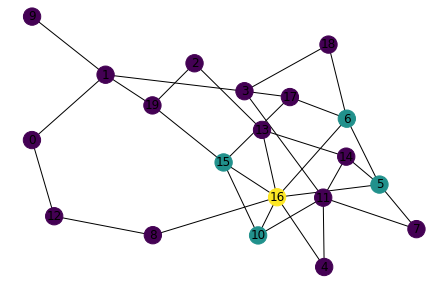}
}
\subfigure[\scriptsize{Letter-High}]{
\label{fig:example4}
\centering
\includegraphics[width=0.22\textwidth]{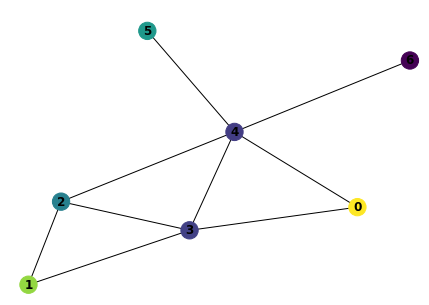}
}
\caption{Visualization of typical instances of the used datasets, where their different graph size and graph structures bring challenge for graph classification models.}
\label{fig:data}
\end{figure} 


\begin{table*}[!htbp]
\centering
\begin{tabular}{llcccc} 
\toprule
\multicolumn{1}{c}{\multirow{2}{*}{Categories}}  & \multicolumn{1}{c}{\multirow{2}{*}{Baselines}} & \multicolumn{2}{c}{\multirow{1}{*}{COIL-DEL}} & \multicolumn{2}{c}{\multirow{1}{*}{Graph-R52}} \\ 
\cline{3-6}
 & &5-way-5-shot &5-way-10-shot &2-way-5-shot &2-way-10-shot \\
\toprule[0.5pt]
\multirow{3}{*}{Kernels} & GRAPHLET & $47.47\pm1.06$ & $49.04\pm0.98$ & $56.52\pm1.46$ & $57.16\pm1.47$ \\
                         & SP        & $38.33\pm0.62$ & $42.18\pm0.69$ & $74.38\pm1.50$ & \textcolor{blue}{76.96 $\pm$ 1.34} \\
                         & WL        & $43.05\pm1.25$ & $52.28\pm1.47$ & \textbf{76.90 $\pm$ 1.48} & $75.91\pm1.46$ \\
\midrule

\multirow{1}{*}{Finetuning} & finetuning & $68.21\pm1.29$ & $72.38\pm1.40$ & $71.87\pm2.04$ & $72.39\pm1.88$ \\
\midrule
\multirow{9}{*}{GNNs-Pro} 
                        & GCN, TopKPool & $78.01\pm1.83$ & $78.98\pm1.53$ & $69.98\pm1.53$ & $70.19\pm1.37$ \\
                         & GCN, EdgePool     & $76.21\pm1.54$ & $79.43\pm1.58$ & $67.24\pm1.34$ & $67.72\pm1.59$ \\
                         & GCN, SAGPool          & $76.58\pm1.19$ & $79.16\pm1.06$ & $69.88\pm1.40$ & $70.46\pm1.47$ \\
                         & GraphSAGE, TopKPool    & $69.80\pm1.25$ & $74.18\pm1.73$ & $70.43\pm1.76$ & $70.52\pm1.83$ \\
                         
                         & GraphSAGE, EdgePool        & $80.08\pm1.26$ & $80.96\pm1.26$ & $68.13\pm1.59$ & $70.72\pm1.58$ \\
                         
                         & GraphSAGE, SAGPool & $79.30\pm1.12$ & $80.91\pm1.62$ & $68.10\pm1.40$ & $70.49\pm1.32$ \\
                         
                         & GAT, TopKPool          & $76.37\pm1.10$ & $77.29\pm1.40$ & $71.99\pm1.51$ & $73.31\pm1.44$ \\
                         
                         & GAT, EdgePool        & \textcolor{blue}{81.00 $\pm$ 1.22} & \textcolor{blue}{83.57 $\pm$ 0.99} & $66.49\pm1.32$ & $70.49\pm1.17$ \\
    
                         & GAT, SAGPool          & $72.54\pm1.07$ & $73.99\pm1.00$ & $67.78\pm1.52$ & $74.10\pm1.57$ \\
                        
\midrule

\multirow{2}{*}{Ours}   &AS-MAML (wo/AS)  & $79.54 \pm 1.48$ & $81.24\pm1.27$ & $74.12\pm1.39$ & $76.05\pm1.17$ \\
                        & AS-MAML (w/AS)   & \textbf{81.55 $\pm$ 1.39} & \textbf{84.75 $\pm$ 1.30} & \textcolor{blue}{75.33 $\pm$ 1.19} & \textbf{78.33 $\pm$ 1.17} \\
                                  
\bottomrule
\end{tabular}
\caption{Accuracies with a standard deviation of baseline methods and our framework. We tested 200 and 500 N-way-K-shot tasks on COIL-DEL and  Graph-R52, respectively. The \textbf{bold black} numbers denote the best results we get, and the  \textcolor{blue} {blue} numbers denote the second best results. AS-MAML (wo/AS) denotes our framework without Adaptive Step (AS) which is controlled by our step controller, and AS-MAML (w/AS) denotes the whole framework we proposed.}
\label{tab:2}
\end{table*}

\subsection{Baselines}
We adopt four groups of baselines made up of  Graph Kernel, Finetuning, GNNs-Prototypical-Classifier (GNNs-Pro) and Graph Spectral Measures (GSM) \cite{Chauhan2020FEW-SHOT} . For Graph Kernel baselines, we perform  N-way-K-shot graph classification over the test set directly because there are no parameters to transfer. The baselines of the last three groups train a GNNs based graph classifier by performing classification over $C_0$ training classes (see Table \ref{tab:1}). On the test stage, they perform N-way-K-Shot classification. 

\paragraph{Graph Kernel.} This group of methods firstly measure the similarity between labeled support data and query data on the test stage. After that, the similarity matrix was put into a Prototypical Classifier, which has none of parameters \cite{SnellSZ17}, to get predicted labels of query data. We choose typical graph kernel algorithms including Shortest-path  Kernel (SP) \cite{borgwardt2005shortest}, Graphlet Kernel \cite{shervashidze2009efficient} and Weisfeiler-Lehman Kernel (WL)  \cite{shervashidze2011weisfeiler}.

\paragraph{Finetuning.} In this baseline, we train a naive graph classifier consisting of GraphSAGE, SAGPool and MLP classifier. On the test stage, we change the output dimension of the last layer of the classifier and fine-tuning the layer's parameters, while keeping other modules unchanged. 

\paragraph{GNNs-Pro.}  We train a graph classifier following Finetuning. On the test stage, we replace the MLP classifier with Prototypical Classifier. We choose GCN \cite{kipf2016semi}, GraphSAGE \cite{SAGE} and GAT \cite{GAT} as graph convolutional modules, and  Self-attention Pooling (SAGPool) \cite{lee2019self}, TopK Pooling (TopKPool) \cite{unet} and Edge Pooling (EdgePool) \cite{diehl2019edge} as graph pooling modules. 


\paragraph{GSM} Chauhan et al. \cite{Chauhan2020FEW-SHOT} proposed the GSM based method customized for few-shot graph classification. On the training stage, they compute prototype graphs from each class, then they cluster the prototype graphs to produce super-classes. After that, they predict the origin class and super-class of each graph. On the test stage, they only update the classifier based on the classification of origin classes.

\subsection{Experimental Details}
To ensure a fair comparison, we use three convolutional layers followed by corresponding pooling layers for the GNNs based baselines and our proposed framework. We set the same node dimension as 128 for all GNNs based baselines. For the adaptation step, we set the minimum and maximum step by 4 and 15. We implement GNNs based baselines and our framework with  PyTorch Geometric (PyG \footnote{ https://github.com/rusty1s/pytorch\_geometric } )  and graph kernel baselines based on GraKel {\footnote{https://github.com/ysig/GraKeL}}.  We use SGD optimizer with 1e-5 for weight decay and versatile learning rates 0.0001, 0.001,  0.0001 for $\alpha_{1}$, $\alpha_{2}$ and $\alpha_{3}$, respectively.

\subsection{Comparison with Graph Kernel, Finetuning and GNNs-Pro}
 \label{section:exp}
To evaluate the performance of our framework, we performed 5-way-5-shot and 5-way-10-shot graph classification on COIL-DEL dataset. On Graph-R52 dataset, we performed 2-way-5-shot and 2-way-10-shot graph classification.  The results are reported in Table \ref{tab:2}.  
 Our framework utilizes GraphSAGE and SAGPool as graph embedding backbone. So firstly we compare our framework with the finetuning baseline built on GraphSAGE and SAGPool. We found our framework is superior to the finetuning baseline with a large margin, which indicates that the meta-leaner works well with a fast adaptation mechanism.  Moreover, under 5-way-10-shot setting in GraphSAGE-SAGPool baseline, our framework achieves about 3.84\%  improvement on the COIL-DEL dataset.

Surprisingly, traditional graph kernel based baselines achieve competitive performance on Graph-R52 dataset. The reasons are two-fold: (1)  our graphs from texts contain many well defined sub-graphs built by text themes and their neighbor words, and this pattern gives graph kernels a favorable position; (2) the parameters of kernel methods are far less than GNNs based methods. So they are not prone to be overfitting, which is GNNs' common problem for the few-shot task. However,  finding an appropriate kernel is difficult (e.g., From Table \ref{tab:2}, SP behaves badly compared to WL and GRAPHLET on the COIL-DEL dataset).

\begin{figure*}[!t]\small
\centering
\subfigure[\scriptsize{GSM on TRIANGLES}]{
\label{fig:exa1}
\centering
\includegraphics[width=0.21\textwidth]{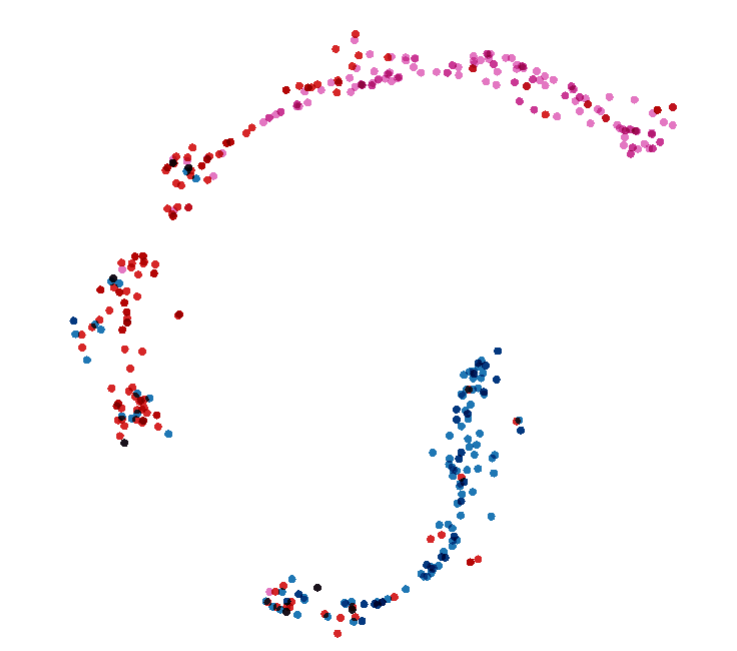}
}
\subfigure[\scriptsize{ AS-MAML on TRIANGLES}]{
\label{fig:exa}
\centering
\includegraphics[width=0.21\textwidth]{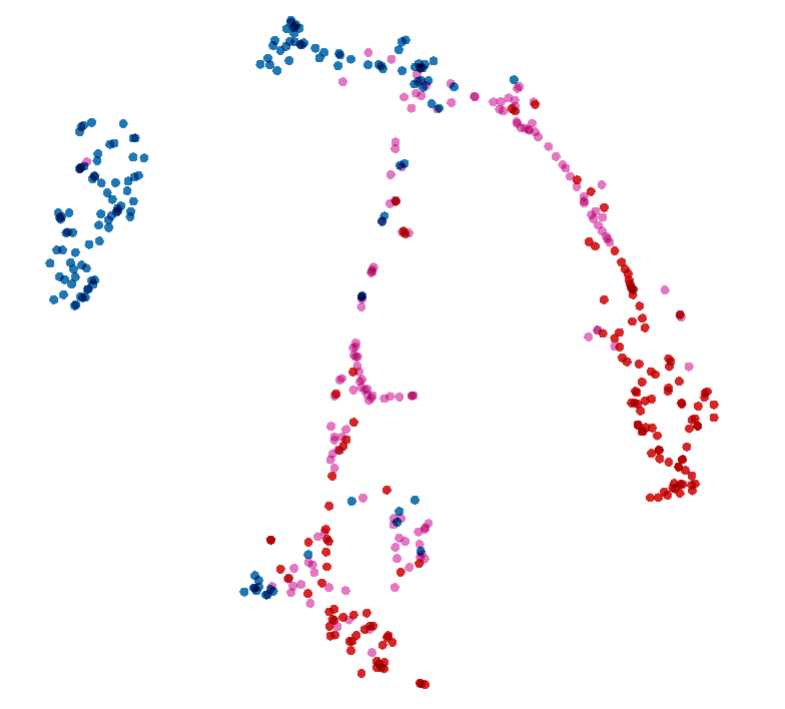}
}
\subfigure[\scriptsize{GSM on Letter-High}]{
\label{fig:exm3}
\centering
\includegraphics[width=0.21\textwidth]{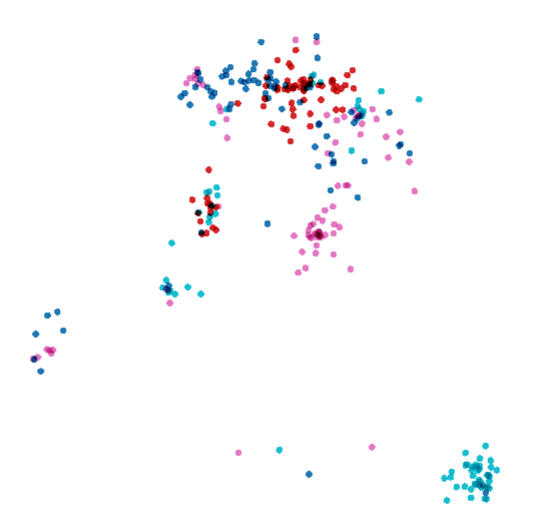}
}
\subfigure[\scriptsize{AS-MAML on Letter-High}]{
\label{fig:exa4}
\centering
\includegraphics[width=0.21\textwidth]{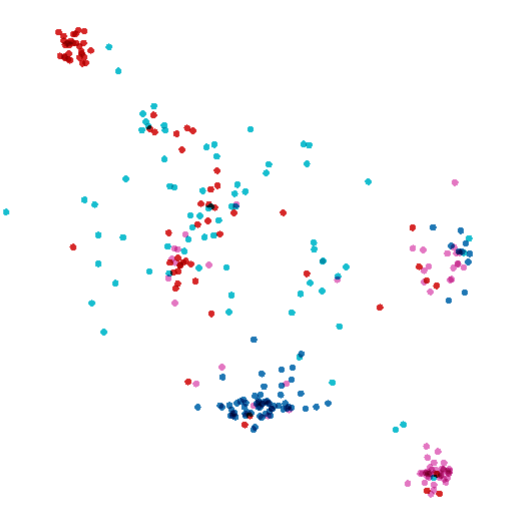}
}
\caption{T-SNE visualization of unseen classes under 5-way-5-shot scenarios. We randomly sample 400 examples per dataset and visualized the representations before they were put into classifying layer. We can see more clear distribution boundary using our method.}
\label{fig:tsne}
\end{figure*} 
\begin{table}[!tpbh]
\centering
\begin{tabular}{cccc} 
\toprule
\multicolumn{1}{c}{\multirow{2}{*}{Methods}} & \multicolumn{1}{c}{\multirow{2}{*}{Shots}} & \multicolumn{2}{c}{\multirow{1}{*}{Datasets}} \\ 

\cline{3-4}
& &TRIANGLES &Letter-High  \\ 
\toprule[0.6pt]

\multirow{2}{*}{GSM}   &5-shot  &  $71.40\pm4.34$ & $69.91\pm5.90$   \\

                       &10-shot &  $75.60\pm3.67$ & $73.28\pm3.46$  \\
\midrule                      
\multirow{2}{*}{Ours}  &5-shot  &  $86.47\pm0.74$ & $76.29\pm0.89$   \\

                       &10-shot &  $87.26\pm0.69$ & $77.87\pm0.75$  \\
\bottomrule
\end{tabular}
\caption{Accuracies evaluated from GSM and AS-MAML we proposed. For AS-MAML, we test 200 N-way-K-shot tasks on both datasets. For GSM, we use the best results in their paper.}
\label{tab:3}
\end{table}

\begin{figure*}[!htp]
\centering
\subfigure[\scriptsize{Test accuracies}]{
\label{fig:com1}
\centering
\includegraphics[width=0.22\textwidth]{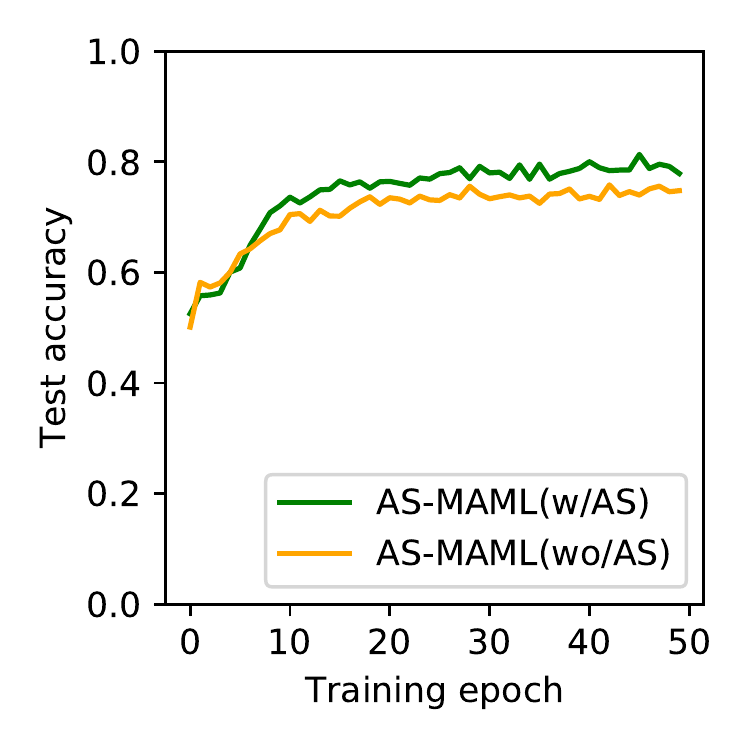}
}
\subfigure[\scriptsize{ ANI \& training accuracies}]{
\label{fig:com2}
\centering
\includegraphics[width=0.22\textwidth]{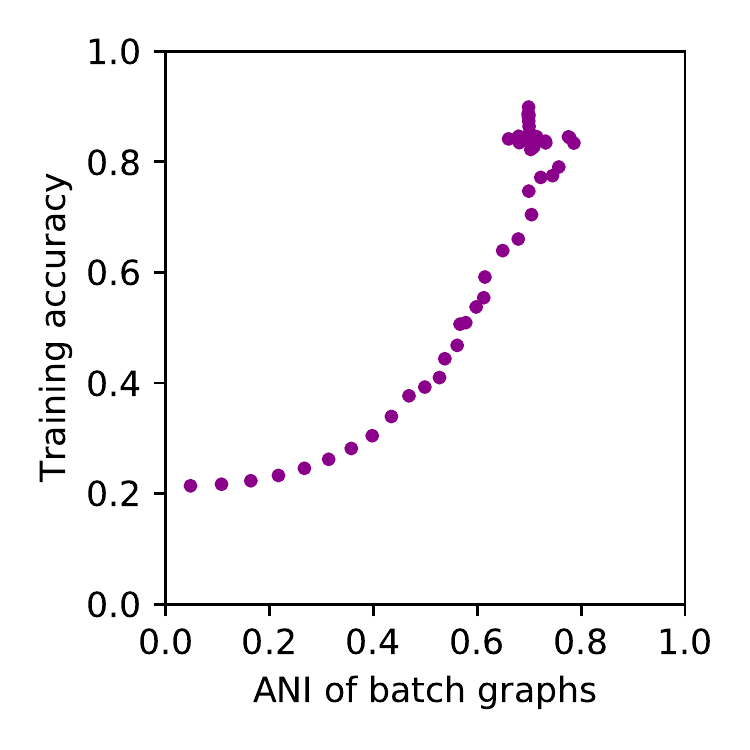}
}
\subfigure[\scriptsize{Adaptation steps }]{
\label{fig:com3}
\centering
\includegraphics[width=0.22\textwidth]{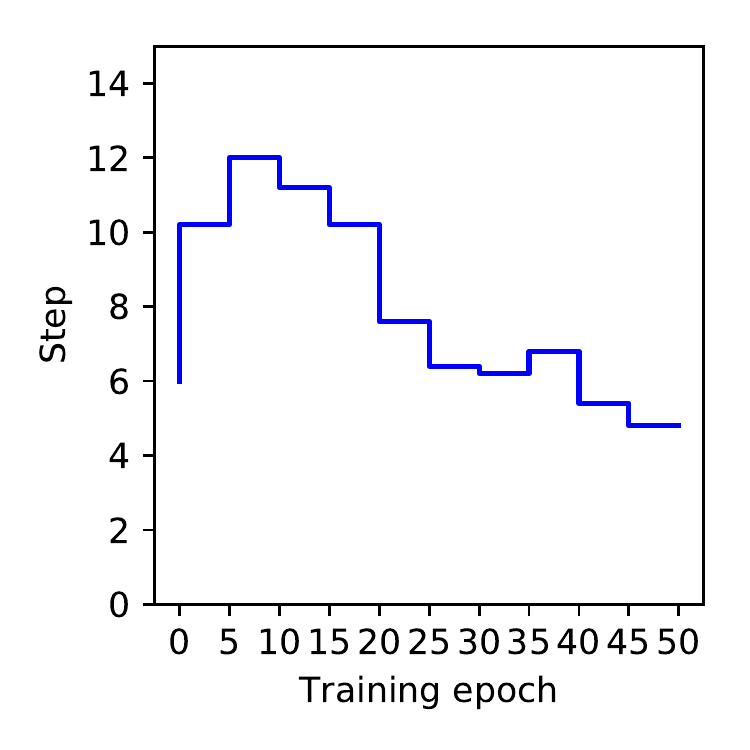}
}

%
\caption{Illustrations of the learning process under the 5-way-10-shot setting on  COIL-DEL dataset. (a) Test accuracies calculated from AS-MAML (wo/AS) and AS-MAML (w/AS) respectively. The adaptation step of AS-MAML (wo/AS) is 6, and other hyper-parameters are same as AS-MAML (w/AS). The initial values are reported after $0$-th training epoch. (b) The normalized ANIs and training accuracies in the first 50 epochs. Both of them are extracted from support graphs of the training set. (c) The variations of the adaptation step on the training stage. the value at epoch 0 is the initial adaptation step, and then we calculate an average for every 5 epochs. }
\label{fig:train_process}
\end{figure*} 

\subsection{Comparison with GSM Based Method}
\label{comparisonGSM}
The GSM based method proposed by Chauhan et al. \cite{Chauhan2020FEW-SHOT} did not adopt the episodic training paradigm, which is a key idea in our paper, so the method is inappropriate to be trained by N-way-K-shot graph classification on COIL-DEL and Graph-R52 dataset. For a fair comparison, we evaluate our framework on TRIANGLES and Letter-High, which are typical datasets used in their paper. As their partition, we randomly split out 20\% examples from the training set to perform validation. Following their test configuration, we perform 3-way-K-shot and 4-way-K-shot classification on TRIANGLES and Letter-High respectively. The comparison of performances is shown in Table \ref{tab:3}. From the table, we conclude that our framework outperforms theirs with a large margin. The reason behind it is that they assume the test classes belong to the same set of super-classes built from the training classes. However, the label spaces of training classes and test classes usually do not overlap in few-shot settings. We observe that the graphs of training classes and test classes have similar sub-structures, which can be discovered by a well initialized meta-learner within a few adaptation steps. As mentioned before, different classes in TRIANGLES have similar triangle structure. Therefore our framework gets the most obvious improvement on this dataset. Besides, the visualization (Figure \ref{fig:tsne}) of latent representation on unseen classes demonstrates the effectiveness of our method.

\subsection{Ablation Study and Detail Analysis} 
In this section, we show the effect of the controller module by ablation study. First of all,  without the adaptive step (AS), we evaluate the performance of our framework by just putting GraphSAGE, SAGPool into meta-learner. From Table \ref{tab:2}, we found that under 2-way-10-shot setting on Graph-R52, AS-MAML (wo/AS) brings about 3.06\% improvement compared with finetuning baseline. Furthermore, the step controller brings about 2.28\% improvement under 2-way-10-shot setting on Graph-R52 and 3.51\% improvement under 5-way-5-shot setting on COIL-DEL.

We did a deeper analysis for ANI and give more details of the step controller module under 5-way-10-shot setting on  COIL-DEL dataset. 
Figure \ref{fig:com1} shows the effect on the test set after adding the adaptive step (AS).
The scatter diagram (Figure \ref{fig:com2}) shows that ANIs have a positive correlation with classification accuracies, which means larger ANI indicates better graph embedding for the MLP classifier module. The advantage of ANI against classification accuracy is that a larger ANI implies more discriminative graph embedding modules, while a better classification accuracy may mean that the MLP classifier module is overfitted on poor graph embedding modules. Finally, Figure \ref{fig:com3} shows the variations of the adaptation step produced by the controller. At the beginning, the controller receives larger loss and smaller ANI, so it gives more adaptation steps to meta-learner for encouraging exploration. When the meta-learner has been trained well, the controller receives smaller loss and larger ANI, so it outputs smaller step size to alleviate overfitting. 

\section{Conclusion and Future Works}
Modeling real-world data into graphs is getting more attention in recent years. In this paper, we focus on few-shot graph classification and propose a novel framework named AS-MAML. To control the meta-learner's adaptation step, we proposed a novel step controller in a RL way by using ANI to demonstrate embedding quality. Beyond that, ANI is calculated by unsupervised way like estimating Mutual Information on graphs \cite{2018deep}. Exploring and utilizing them for graph representation learning is an interesting future work.  Moreover, we expect better graph embedding methods to improve the performance of our framework, including GIN and its variants. We also expect our work can be expanded to more challenging graph classification tasks like skeleton based action recognition, protein classification and subgraph analysis of social networks.

\bibliographystyle{ACM-Reference-Format}
\bibliography{main}
\end{document}